\newcommand{\bunderline}[1]{\underline{#1\mkern-4mu}\mkern4mu }
\newcounter{ALC@tempcntr}% Temporary counter for storage
\newcommand{\NEWSECTION}[1]{%
    \setcounter{ALC@tempcntr}{\arabic{ALC@rem}}% Store old counter
    \setcounter{ALC@rem}{1}% To avoid printing line number
    \item {  \bf  #1 }  % Display comment + does not increment list item counter
    \setcounter{ALC@rem}{\arabic{ALC@tempcntr}}% Restore old counter
}%
\newcommand*{\rom}[1]{\expandafter\@slowromancap\romannumeral #1@}
\newtheorem{theorem}{Theorem}
\newtheorem*{proof*}{proof}
\newtheorem{lemma}{Lemma}
\newtheorem{remark}{Remark}
\newtheorem{corollary}{Corollary}
\newtheorem{assumption}{Assumption}
\DeclareMathOperator{\bw}{\mathbf{w}}
\newcommand\inner[2]{\langle #1, #2 \rangle}
\begin{document}

\title{Energy-Harvesting Distributed Machine Learning}

%%% Several authors with up to three affiliations:
\author{%
  \IEEEauthorblockN{Ba\c{s}ak~G{\"u}ler}
  \IEEEauthorblockA{University of California, Riverside\\
                    Riverside, California\\
                    \emph{bguler@ece.ucr.edu}}
  \and
  \IEEEauthorblockN{Aylin Yener}
  \IEEEauthorblockA{The Ohio State University\\ 
                    Columbus, Ohio\\
                    \emph{yener@ece.osu.edu}}
}

%%% Many authors with many affiliations:
% \author{%
%   \IEEEauthorblockN{Albus Dumbledore\IEEEauthorrefmark{1},
%                     Olympe Maxime\IEEEauthorrefmark{2},
%                     Stefan M.~Moser\IEEEauthorrefmark{3}\IEEEauthorrefmark{4},
%                     and Harry Potter\IEEEauthorrefmark{1}}
%   \IEEEauthorblockA{\IEEEauthorrefmark{1}%
%                     Hogwarts School of Witchcraft and Wizardry,
%                     1714 Hogsmeade, Scotland,
%                     \{dumbledore, potter\}@hogwarts.edu}
%   \IEEEauthorblockA{\IEEEauthorrefmark{2}%
%                     Beauxbatons Academy of Magic,
%                     1290 Pyrénées, France,
%                     maxime@beauxbatons.edu}
%   \IEEEauthorblockA{\IEEEauthorrefmark{3}%
%                     ETH Zürich, ISI (D-ITET), ETH Zentrum, 
%                     CH-8092 Zürich, Switzerland,
%                     moser@isi.ee.ethz.ch}
%   \IEEEauthorblockA{\IEEEauthorrefmark{4}%
%                     National Chiao Tung University (NCTU), 
%                     Hsinchu, Taiwan,
%                     moser@isi.ee.ethz.ch}
% }

\maketitle

%%%%%%
%% Abstract: 
%% If your paper is eligible for the student paper award, please add
%% the comment "THIS PAPER IS ELIGIBLE FOR THE STUDENT PAPER
%% AWARD." as a first line in the abstract. 
%% For the final version of the accepted paper, please do not forget
%% to remove this comment!
%%

% \setlength{\abovedisplayskip}{0.23cm}
% \setlength{\belowdisplayskip}{0.23cm}

\begin{abstract}
This paper provides a first study of utilizing energy harvesting for sustainable machine learning in distributed networks. We consider a distributed learning setup in which a machine learning model is trained over a large number of devices that can harvest energy from the ambient environment, and  develop a practical learning framework with theoretical convergence guarantees. We demonstrate through numerical experiments that the proposed framework can significantly outperform energy-agnostic benchmarks. Our framework is scalable, requires only local estimation of the energy statistics, and can be applied to a wide range of distributed training settings, including machine learning in wireless networks, edge computing, and mobile internet of things.  
\end{abstract}

\section{Introduction}
The environmental impact of large-scale machine learning is a major challenge against the sustainability of future smart ecosystems. 
For instance, the carbon emission of training a single machine learning model can get as large as the lifetime of five cars \cite{strubell2019energy}. 
The environmental impact will be even greater with the emergence of machine learning in distributed environments, where millions of devices are expected to participate in training on a regular basis. 
This, combined with the fact that state-of-the-art machine learning models are trained over billions of parameters \cite{brown2020language}, calls for a novel design paradigm for large-scale machine learning. 

In this paper, we propose energy harvesting \cite{ulukus2015energy} for the design of sustainable distributed machine learning systems. 
We consider a distributed training scenario with $N$ clients (users), who wish to collaborate to train a machine learning model. Each user holds a local dataset $\mathcal{D}_i$, and the goal is to train a machine learning model over the joint dataset $\mathcal{D}_1, \ldots, \mathcal{D}_N$. Training is performed through distributed stochastic gradient descent (SGD)   coordinated through a central server, who maintains a global model. At each iteration of training, the server sends the current estimate of the model parameters to the users. Users then locally update the global model by computing a local gradient on their local dataset, and send their local updates to the server. The server then aggregates the local updates from the users,  updates the global model, and sends the updated model back to the users. 
Unlike the conventional distributed SGD setting, in this work, users receive energy through an energy harvesting process \cite{ulukus2015energy, radousky2012energy, tutuncuoglu2017binary, ozel2015fundamental, tutuncuoglu2012optimum, ozel2011transmission, varan2016delay, tutuncuoglu2015throughput, tutuncuoglu2015energy, tutuncuoglu2012sum, gurakan2013energy, yang2011optimal, ozel2012achieving}, and can only participate in training if they have energy available to do so.

Energy and resource efficiency in machine learning has been studied in various notable works \cite{zeng2020energy,  wang2019adaptive}. 
Broadly, these settings can be categorized into two. The first line of work focuses on minimizing the energy consumption of the compute or communication framework \cite{zeng2020energy}. The second line of work, on the other hand, is focused on minimizing the training loss within a given energy budget, where all of the energy is available at the beginning of training \cite{wang2019adaptive}. In contrast, our work focuses on training with devices that can harvest small amounts of energy from the ambient environment, where energy arrivals are intermittent and non-homogeneous across different devices. 

Prior to this work, user sampling for distributed machine learning has been primarily investigated in the context of improving communication efficiency or convergence rate \cite{DBLP:conf/nips/AgarwalSYKM18, LiHYWZ20, chen2020optimal, cho2020client, ren2020scheduling, sun2020energy, amiri2020update}. In these works, the primary goal is to either select a small set of users to participate at a given training iteration in order to  reduce the overall  communication overhead or due to bandwidth limitations, or to select a few informative users to maximize the convergence rate of training, with the assumption that all users are available to participate in training if selected. In contrast, in our setting, users can only participate in training if they have available energy. Moreover, the energy availability of different users can be  different. 
Several notable works have considered distributed learning when users have a chance to drop out, unlike the current setup, in these settings, user dropouts occur uniformly at random  \cite{mcmahan2017communication, bonawitz2017practical, so2020turbo}. 
% Several works have considered distributed learning when users have a chance to drop out, however, in these settings, user dropouts occur uniformly at random  \cite{mcmahan2017communication, bonawitz2017practical, so2020turbo}. 

We demonstrate that energy-harvesting can be a good candidate for machine learning in distributed networks, through a practical distributed training framework with theoretical convergence guarantees. 
Our experiments show that the proposed framework significantly outperforms the alternative distributed SGD benchmarks that are agnostic to the energy arrival process. 
We hope our work to open up new research directions in leveraging energy-harvesting for sustainable machine learning in large-scale mobile and edge networks.

\section{System Model}\label{sysmodel}
\subsection{Training Setup}\label{training}
We consider a distributed training setup in a network with $N$ devices (users). The users are connected through a central server who coordinates the training. User $i$ has a local dataset $\mathcal{D}_i$, consisting of $D_i$ data points. We define the total number of data points in the network as $D = \sum_{i\in[N]} D_i$. The goal is to train a model $\mathbf{w}$ that minimizes a global loss function
\begin{equation}
F(\mathbf{w}) = \frac{1}{D} \sum_{i=1}^{N} \sum_{j=1}^{D_i} l(\mathbf{w}, \mathbf{x}_{ij}) \label{eq:global1}
\end{equation} 
where $l(\mathbf{w},  \mathbf{x}_{ij})$ denotes the loss of data point $\mathbf{x}_{ij}$ from the local dataset of user $i$. 
Note that the loss function in \eqref{eq:global1} is evaluated with respect to the entire set of data points that belong to the $N$ users. 
As such, equation \eqref{eq:global1} can also be written as
\vspace{-0.12cm}\begin{equation}\label{global-loss2}
F(\mathbf{w}) = \sum_{i=1}^{N} p_i F_i(\mathbf{w})
\vspace{-0.12cm}\end{equation}
where $p_i = \frac{D_i}{D}$ such that $\sum_{i=1}^n p_i = 1$, and  
\vspace{-0.12cm}\begin{equation}\label{localoss}
F_i(\mathbf{w}) = \frac{1}{D_i} \sum_{j=1}^{D_i} l(\mathbf{w}, \mathbf{x}_{ij})
\vspace{-0.12cm}\end{equation}
represents the local loss function of user $i$. 

Training is performed through distributed SGD, in which the model parameters are updated iteratively in the negative direction of the gradient. 
Each iteration is represented by a discrete time instant $t\in\{0, 1, 2, \ldots\}$. 
The current estimation of the model parameters at iteration $t$ is represented by a $d$-dimensional vector $\mathbf{w}^{(t)}\in \mathbb{R}^d$, where $d$ is the model size. 

We now review the conventional distributed SGD protocol. 
In this setting, at the beginning of each iteration, the server sends $\mathbf{w}^{(t)}$ to the users. 
Then, user $i\in\{1, \ldots, N\}$ computes a local stochastic gradient, 
\vspace{-0.12cm}\begin{equation}\label{gradient}
g_i (\mathbf{w}^{(t)}, \xi_{i}^{(t)}) \triangleq \nabla F_i(\mathbf{w}^{(t)}, \xi_{i}^{(t)}) 
\vspace{-0.12cm}\end{equation} 
by using a (uniformly) random sample $\xi_{i}^{(t)}$ from the local dataset $\mathcal{D}_i$. 
Hence,  the stochastic gradient is an unbiased estimator of the true gradient of user $i$,  

\vspace{-0.12cm}\begin{equation}\label{unbiased}
\mathbb{E}_{\xi_i^{(t)}} [\nabla F_i(\mathbf{w}^{(t)}, \xi_{i}^{(t)})] 
= \nabla  F_i(\mathbf{w}^{(t)}),  
\vspace{-0.12cm}\end{equation}
where $\nabla  F_i(\mathbf{w}^{(t)})$ is the gradient of the local loss function in  \eqref{localoss}. 
The gradient of the global loss function in  \eqref{eq:global1} is given by,
\vspace{-0.12cm}\begin{equation}\label{global-gradient}
\nabla  F (\mathbf{w}^{(t)}) \triangleq \sum_{i=1}^N p_i \nabla F_i(\mathbf{w}^{(t)}). 
\vspace{-0.12cm}\end{equation} 
After the local computations, users send their local gradients from \eqref{gradient} to the server. The server then updates the model,
\vspace{-0.12cm}\begin{equation}\label{eq:updateconv}
\mathbf{w}^{(t+1)} = \mathbf{w}^{(t)} - \eta \sum_{i=1}^N p_i  g_i(\mathbf{w}^{(t)}, \xi_{i}^{(t)})
\vspace{-0.12cm}\end{equation}
where $\eta$ is the learning rate (step size), and sends the updated model back to the users for the next iteration.

\subsection{Energy Harvesting Profile of the Users} 
This work considers devices that are powered by the energy harvested from the ambient environment, such as RF, solar, or kinetic energy \cite{ulukus2015energy, radousky2012energy}. 
We assume that one step of the SGD protocol costs a unit amount of energy at each user, which includes computing the local gradient from \eqref{gradient} and sending it to the server. 
It is also assumed that each user has a unit battery that can store enough energy for one step SGD.

We let $E_i^t$ denote the energy arrival process at user $i$, in particular $E_i^t=1$ if user $i$ receives energy at time $t$ and $E_i^t=0$ otherwise. The specific distribution of the energy arrivals depends on the harvesting process.  
Our focus is on the following energy harvesting scenarios.

\vspace{0.2cm}
\noindent
\subsubsection{Deterministic Energy Arrivals}\label{sec:deter}

We first consider a deterministic energy harvesting scenario in which energy arrivals are known by each user in advance. 
We assume that energy may arrive at arbitrary non-overlapping time instances, and let 
$\mathcal{I}_i = \{t: E_i^t = 1\}$  
denote the set of time instances at which user $i$ receives energy. We also define $\bunderline{I}_i^t = \max_{t':  t'\leq t, \;\; t' \in \mathcal{I}_i} t'$   
for the time of the most recent energy arrival up to $t$, and $\bar{I}_i^t = \min_{t':  t'> t, \;\; t' \in \mathcal{I}_i} t'$ 
for the time of the next energy arrival after time $t$. 
Finally, for a given $t$, we define the duration between 
$\bunderline{I}_i^t$ and $\bar{I}_i^t$ as, 
\begin{equation}
T_i^t = \bar{I}_i^t  - \bunderline{I}_i^t 
\end{equation}

\subsubsection{Stochastic Energy Arrivals}\label{subsec:stoc}
We next consider the stochastic energy harvesting scenario where energy arrivals are modeled through a stochastic process. Unlike the deterministic setting, users do not know the exact time instant at which energy will be received, but only the probabilistic model governing the underlying harvesting process. 
Our focus is on the following stochastic arrival scenarios.

\vspace{0.2cm}
\noindent
{\it (Binary Arrivals)} 
In the binary energy arrival setup, at each time instant, user $i$ receives a unit amount energy with probability $\beta_i$. 
More specifically, we let $E_i^t\sim\text{Bern}(\beta_i)$:
\begin{equation}\label{binary}
E_i^t = \left \{ \begin{matrix} 1 & \text{ with probability } & \beta_i\\ 0 &  \text{ with probability } & \quad \; 1-\beta_i \end{matrix} \right .
\end{equation}
where $\beta_i\in(0,1]$, to represent whether or not user $i$ receives energy at time $t$. 
Parameter $\beta_i$ quantifies how frequent user $i$ receives energy, and may vary from one user to another.

\vspace{0.2cm}
\noindent
{\it (Uniform Arrivals)} 
We next consider a uniform energy arrival scenario in which device $i$ receives a unit amount of energy at a uniformly random time instant every $T_i$ time instants. 
Formally, for any $t$ such that $t\mod T_i = 0$, user $i$ receives a unit amount of energy at a uniformly random time instant within $\{t, \ldots,  t + T_i-1\}$.

Note that this is not an immediate generalization of the first setting, as in the former setup there is a non-zero probability that user $i$ will never receive energy in $T_i$ time instants. In contrast, in the second setting, user $i$ receives a unit amount energy with probability $1$ at every $T_i$ time instants, but the exact time instant at which energy is received is unknown.

As we demonstrate in our experiments, the conventional  distributed SGD strategy from Section~\ref{sysmodel} might bias the model towards users that have more frequent energy arrivals, causing a performance loss in training. 
As such, the training strategy should take into account the energy arrival patterns of the users. 

\vspace{0.2cm}
\noindent
{\bf Main Problem. } 
Given the above training and energy harvesting settings, the main problem we study in our work is, \emph{``How to design a distributed stochastic gradient descent framework for energy harvesting devices, where energy arrivals are intermittent and heterogeneous, while ensuring theoretical convergence guarantees?''.}

In the sequel, we provide a simple energy harvesting distributed learning strategy with provable convergence guarantees. The proposed strategy takes into account the intermittent energy availability due to the energy harvesting process of the individual users while ensuring that the model does not bias towards any particular user.

\section{Energy Harvesting Distributed SGD} 

\subsection{Distributed SGD with Deterministic Energy Arrivals}

We first study the deterministic energy harvesting scenario and provide a simple distributed training framework with theoretical convergence guarantees. The individual steps of our framework is provided in Algorithm~\ref{Alg-deterministic}. 
Our framework consists of three main components, user scheduling, local gradient computations, and server-side model update.

\setlength{\textfloatsep}{10pt}

\begin{algorithm}[t]
\small
  \caption{Distributed SGD with Deterministic Energy Arrivals}\label{Alg-deterministic} 
  \begin{algorithmic}[1]
    \INPUT{Number of devices $N$, local dataset $\mathcal{D}_i$ of device $i\in[N]$, number of iterations $T$, initial model parameters $\mathbf{w}^{(0)}$. 
    } \
    \OUTPUT{Model parameters (weights) $\mathbf{w}^{(T)}$.} \ 
    \vspace{0.1cm}
    
    \FOR{user $i=1,\ldots,N$}
    \STATE Initialize $U_i^t =   0$ for $t\in[T]$.
    \ENDFOR

    \vspace{0.1cm}

    \FOR{iteration $t=0,\ldots,T-1$} 

     \vspace{0.1cm}
    \NEWSECTION{Users $i=1, \ldots, N:$}
    \vspace{0.1cm}
    \IF{$ E_i^t = 1$} 
    \STATE Sample an integer $J$ uniformly random from $\{0, \ldots, T_i^t-1\}$.    
    \STATE Update $U_i^{t+J}  = 1$. 
    \ENDIF

    \IF{$ U_i^t = 1$} 
    \STATE Compute the local gradient $g_i(\mathbf{w}^{(t)}, \xi_{i}^{(t)})$. 
    \STATE Send $T_i^tg_i(\mathbf{w}^{(t)}, \xi_{i}^{(t)})$ to the server. 
    \ENDIF

    \vspace{0.1cm}
    \NEWSECTION{Server:}  
    \STATE Update the model according to \eqref{global-deter}.
    \STATE Send the model parameters $\mathbf{w}^{(t+1)}$ to the users.

    \ENDFOR

  \end{algorithmic}
\end{algorithm}

\subsubsection{User scheduling} \label{scheduling-deter}

The first component of our framework is user scheduling for training. Conventional user selection algorithms for distributed SGD are designed under the assumption that all users are inherently available to participate in the training process if selected, and employ a user sampling strategy to reduce the communication load or aim at selecting the users that will maximize the  convergence rate for training \cite{DBLP:conf/nips/AgarwalSYKM18, LiHYWZ20, chen2020optimal, cho2020client}. In contrast, in our setup, not all users can participate in the training process at all rounds. This is due to the intermittent energy arrivals, if a user has no energy at a given time instant, they will not be able to participate in training. 

A naive approach would be to utilize the conventional distributed SGD algorithm from \eqref{eq:updateconv}. 
However, doing so may bias the trained model towards users who have more frequent energy availability.   
Another approach is to wait until all users become available, and then use the conventional distributed SGD algorithm from \eqref{eq:updateconv}. However, waiting for all users to have enough energy can significantly increase the total training time needed to achieve a target performance level. 

Instead, we propose a practical scheduling strategy that can be performed locally by the users, while ensuring that the model does not bias towards any user. 
In this setting, whenever a user receives energy, i.e., $E_i^t=1$ for some $t$, the user samples an integer $J$ uniformly at random from the set $\{0, \ldots, T_i^t-1\}$, and participates at iteration $t+J$.

\subsubsection{Local gradient computation}

At the beginning of each training iteration, the server sends the current estimate of the model parameters $\mathbf{w}^{(t)}$ to the users. 
If a user decides to participate in the current training iteration $t$, according to the scheduling strategy from  Section~\ref{scheduling-deter}, it computes the local gradient from \eqref{gradient}. 
Then, the user sends to the server a scaled version of their local gradient,
\begin{equation}\label{weighted}
 T_i^t g_i(\mathbf{w}^{(t)}, \xi_{i}^{(t)}) = T_i^t \nabla F_i(\mathbf{w}^{(t)}, \xi_{i}^{(t)}) 
\end{equation}

\subsubsection{Server-side model update}
After receiving the local computations from \eqref{weighted} from the participating users, the server updates the model as:
\begin{equation}\label{global-deter}
\mathbf{w}^{(t+1)} = \mathbf{w}^{(t)} - \eta \sum_{i\in S_t} p_i \left ( T_i^t g_i(\mathbf{w}^{(t)}, \xi_{i}^{(t)})\right ) 
\end{equation}
% for the next iteration. 
where $\mathcal{S}_t$ denotes the set of users who have participated at round $t$. 
Note that due to the stochastic nature of the user scheduling process, $\mathcal{S}_t$ is random.

As we demonstrate in Section~\ref{04_Convergence}, this process provides theoretical convergence guarantees for the model. Moreover, the user scheduling process does not require a central coordinator and can be performed locally by the users, solely based on local energy  estimations, hence is scalable to large networks.

\begin{algorithm}[t]
\small
  \caption{Distributed SGD with Stochastic Energy Arrivals}\label{Alg-binary} 
  \begin{algorithmic}[1] 
    \INPUT{Number of devices $N$, local dataset $\mathcal{D}_i$ of device $i\in[N]$, number of iterations $T$, initial model parameters $\mathbf{w}^{(0)}$. 
    } \
    \OUTPUT{Model parameters (weights) $\mathbf{w}^{(T)}$.} \

    \vspace{0.1cm}

    \FOR{iteration $t=0,\ldots,T-1$}

     \vspace{0.1cm}
    \NEWSECTION{Users $i=1, \ldots, N:$}
    \vspace{0.1cm} 
    \IF{$ E_i^t = 1$} 
    \STATE Compute the local gradient $g_i(\mathbf{w}^{(t)}, \xi_{i}^{(t)})$. 
    \STATE Send $\gamma_i^tg_i(\mathbf{w}^{(t)}, \xi_{i}^{(t)})$ to the server. 
    \ENDIF

    \vspace{0.1cm}
    \NEWSECTION{Server:} 
    \STATE Update the model according to \eqref{global-binary}. 
    \STATE Send the model parameters $\mathbf{w}^{(t+1)}$ to the users.

    \ENDFOR

  \end{algorithmic}
\end{algorithm}

\subsection{Distributed SGD with Stochastic Energy Arrivals}

We next consider distributed training under the  stochastic energy harvesting setting. 
The training strategy again consists of three main components, user scheduling, local gradient computation, and server-side model update. 
We employ a \emph{best-effort} user scheduling strategy, where each user participates in training as soon as they receive energy, by computing the local gradient from \eqref{gradient}, and sending to the server a scaled gradient $\gamma_i^t g_i(\mathbf{w}^{(t)}, \xi_{i}^{(t)})$, where $\gamma_i^t = \frac{1}{\beta_i}$ and $\gamma_i^t = T_i$ for the binary and uniform energy arrival settings, respectively. 

After receiving the local computations from the participating users, the server updates the model as,
\begin{equation}\label{global-binary}
\mathbf{w}^{(t+1)} = \mathbf{w}^{(t)} - \eta \sum_{i\in S_t} p_i \left (\gamma_i^t g_i(\mathbf{w}^{(t)}, \xi_{i}^{(t)})\right ) 
\end{equation} 
The individual steps of this process are provided in Algorithm~\ref{Alg-binary}.

\label{section:stoc}

\section{Convergence Analysis} \label{04_Convergence}

We now state the convergence guarantees of our framework, by first reviewing a few common technical assumptions \cite{LiHYWZ20, stich2018local} that will be needed for our  convergence analysis. 

\begin{assumption}(Bounded variance) The variance of the stochastic gradients from \eqref{gradient} are bounded:
\begin{equation}\label{eq:bounded-var}
E_{\xi_i^{(t)}}[||g_i (\mathbf{w}^{(t)}, \xi_{i}^{(t)}) \!-\!  \nabla F_i(\mathbf{w}^{(t)})||^2]\leq \sigma^2 \; \text{ for } i\in [N]
\end{equation}
\end{assumption}

\begin{assumption}(Second moment bound) The expected squared norm of the stochastic gradients from \eqref{gradient} are bounded:
\vspace{-0.05cm}\begin{equation}\label{eq:bounded-norm}
E_{\xi_i^{(t)}}[||g_i (\mathbf{w}^{(t)}, \xi_{i}^{(t)})||^2]\leq G^2 \quad \text{ for } i\in [N]
\vspace{-0.05cm}\end{equation}
\end{assumption}

We also assume that the local loss functions $F_i(\mathbf{w})$ for $i\in[N]$ (and thus the global loss function $F(\mathbf{w})$) are $\mu$-strongly convex and $L$-smooth, as in \cite[Assumptions 1 and 2]{LiHYWZ20}. 
Next, we provide a key technical lemma.

\begin{lemma}\label{lemma-deter} (Unbiasedness)
For distributed SGD with deterministic energy arrivals,
\vspace{-0.1cm}\begin{equation}\label{eq:unbiased}
\mathbb{E}_{S_t}\left[\sum_{i\in S_t} p_i T_i^t g_i(\mathbf{w}^{(t)}, \xi_{i}^{(t)})\right] = 
\sum_{i=1}^N p_i g_i(\mathbf{w}^{(t)}, \xi_{i}^{(t)}),
\vspace{-0.1cm}\end{equation}
hence the user scheduling scheme is unbiased. Moreover, for distributed SGD with stochastic energy arrivals, the unbiasedness condition from \eqref{eq:unbiased} holds by replacing $T_i^t$ with $\frac{1}{\beta_i}$ and  $T_i$ for binary and uniform arrivals, respectively. 
\end{lemma}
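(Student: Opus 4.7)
The plan is to rewrite the expectation on the left-hand side of \eqref{eq:unbiased} as a sum over all users weighted by the indicator $\mathbf{1}\{i\in S_t\}$, and then show that for each of the three scheduling rules (deterministic, binary, uniform) the scaling coefficient ($T_i^t$, $1/\beta_i$, or $T_i$) has been chosen precisely as the reciprocal of $\Pr[i\in S_t]$. Linearity of expectation then yields the claim directly; no independence across users is needed.

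For the deterministic case I would fix $t$ and trace how $U_i^t$ is set in Algorithm~\ref{Alg-deterministic}. The value of $U_i^t$ is determined solely by the uniform draw $J$ made at the most recent arrival time $\bunderline{I}_i^t$, which chose a participation slot uniformly from $\{\bunderline{I}_i^t,\dots,\bar{I}_i^t-1\}$. Since $T_i^t$ is constant on this inter-arrival interval (it equals $\bar{I}_i^t - \bunderline{I}_i^t$ for every $t$ in the interval), we have $\Pr[i\in S_t]=\Pr[U_i^t=1]=1/T_i^t$. Consequently $\mathbb{E}[T_i^t\,\mathbf{1}\{i\in S_t\}]=1$, and multiplying by the (conditionally deterministic) term $p_i g_i(\mathbf{w}^{(t)},\xi_i^{(t)})$ and summing over $i$ produces the right-hand side of \eqref{eq:unbiased}.

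For the binary arrivals the derivation is immediate: $\mathbf{1}\{i\in S_t\}=E_i^t\sim\text{Bern}(\beta_i)$, so $\mathbb{E}[\mathbf{1}\{i\in S_t\}/\beta_i]=1$, and the same linearity argument applies with $\gamma_i^t=1/\beta_i$. For uniform arrivals, within every window of length $T_i$ user $i$ participates at exactly one iteration chosen uniformly, so for any $t$ inside such a window $\Pr[i\in S_t]=1/T_i$, matching the scaling $\gamma_i^t=T_i$.

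The only subtlety I anticipate is in the deterministic case: one must be careful to identify the $T_i^t$ that appears in the server-side weight at iteration $t$ with the $T_i^{\bunderline{I}_i^t}$ used by user $i$ when drawing $J$ at the most recent energy arrival. These coincide because $T_i^\cdot$ is constant on each inter-arrival interval, so the weighting and the participation probability line up exactly. Beyond that observation, the proof is a one-line application of linearity of expectation in each of the three cases.
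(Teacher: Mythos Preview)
Your proposal is correct and follows essentially the same approach as the paper: introduce the participation indicator (the paper calls it $\alpha_i^t$), compute $\Pr[i\in S_t]=1/T_i^t$ (resp.\ $\beta_i$, $1/T_i$), and apply linearity of expectation. Your explicit remark that $T_i^\cdot$ is constant on each inter-arrival interval---so the weight sent at time $t$ matches the reciprocal of the participation probability---is a useful clarification that the paper leaves implicit in its one-line computation $P[\alpha_i^t=1]=P[J=t-\bunderline{I}_i^t]=1/T_i^t$.
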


\begin{proof}\label{proof-lemma1} 
We first define a Bernoulli random variable $\alpha_i^{t}$ to represent whether or not user $i$ participates at iteration $t$: 
\vspace{-0.1cm}\begin{equation}\label{alpha}
\alpha_i^{t} = \left \{ \begin{matrix} 1 & \text{ if user $i$ participates at time $t$}\\ 0 & \text{ otherwise}\end{matrix} \right .
\vspace{-0.1cm}\end{equation}
Then, for any given $t$,
\vspace{-0.1cm}\begin{equation}\label{eq:prob}
P[\alpha_i^{t} = 1]  = P[ J =  t - \bunderline{I}_i^t] = \frac{1}{T_i^t}
\vspace{-0.1cm}\end{equation}
By letting $\alpha_t \triangleq (\alpha_1^t,\ldots, \alpha_N^t)$, we find that,
\vspace{-0.1cm}\begin{align}
\!\!\mathbb{E}_{S_t}\!\left[\sum_{i\in S_t} p_i T_i^t g_i(\mathbf{w}^{(t)}, \xi_{i}^{(t)})\right] 
&\!=\! \mathbb{E}_{\alpha_t }\!\!\left[\sum_{i=1}^N \alpha_i^{t}  p_i T_i^t g_i(\mathbf{w}^{(t)}, \xi_{i}^{(t)}) \right] \!\label{lawof1} \\
&\!=\! \sum_{i=1}^N p_i T_i^t \frac{1}{T_i^t} g_i(\mathbf{w}^{(t)}, \xi_{i}^{(t)}) \!\label{iterated2}
\end{align}
where \eqref{lawof1} follows from $S_t = \sum_{i=1}^N \alpha_i^{t}$, and \eqref{iterated2} is from \eqref{eq:prob}.

The proof for stochastic arrivals follows the same lines along with the observation that, for the best-effort user scheduling strategy $P[\alpha_i^t =1]= P[E_i^t=1]$. \vspace{-0.12cm} \end{proof} 
We now state our convergence guarantees. 
\vspace{-0.12cm}\begin{theorem}\label{thm1-deter} 
For training a machine learning model from \eqref{eq:global1}, using the distributed SGD algorithm with deterministic energy arrivals and a constant learning rate $\eta \leq  \min \left\{ \frac{1}{2\mu}, \frac{1}{L} \right\}$. 
\vspace{-0.12cm}\begin{align}\label{eq:conv}
&\mathbb{E} [F(\mathbf{w}^{(T)})] - F(\mathbf{w}^*) \notag \\
&\hspace{0.5cm} \leq  \frac{L}{\mu} (1-\eta\mu)^T (F(\mathbf{w}^{(0)})-F(\mathbf{w}^*) - \frac{\eta C}{2}) 
+ \frac{\eta L C}{2\mu}  
\end{align}
in $T$ iterations, where $\mathbf{w}^{*}$ denotes the optimal model parameters that minimize the global loss function in \eqref{eq:global1}, and
\vspace{-0.12cm}\begin{equation}\label{eq:C}
C \triangleq \Big (\sum_{i=1}^N \Big ( T_{i, max}-1 \Big ) p_i^2 +  
\sum_{i=1}^N \sum_{j=1}^N p_i p_j \Big ) G^2,  
\vspace{-0.12cm}\end{equation} 
where $T_{i, {max}} \triangleq \max\{T_i^1, \ldots, T_i^T\}$ for $i=1\ldots, N$. 
\end{theorem}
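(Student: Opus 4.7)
The plan is to follow the standard strongly-convex SGD template: derive a one-step contraction on $A_t \triangleq \mathbb{E}\|\mathbf{w}^{(t)}-\mathbf{w}^*\|^2$, unroll the resulting geometric recursion, and convert iterate distance into function-value gap via $L$-smoothness at the minimum together with $\mu$-strong convexity at the minimum. Writing the scaled server update as $\tilde{g}^{(t)} \triangleq \sum_i \alpha_i^t p_i T_i^t g_i(\mathbf{w}^{(t)},\xi_i^{(t)})$ in terms of the Bernoulli participation indicators of \eqref{alpha}, Lemma~\ref{lemma-deter} already gives $\mathbb{E}\tilde{g}^{(t)} = \nabla F(\mathbf{w}^{(t)})$, which will drive the contraction.

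Starting from the identity $\|\mathbf{w}^{(t+1)}-\mathbf{w}^*\|^2 = \|\mathbf{w}^{(t)}-\mathbf{w}^*\|^2 - 2\eta\langle\mathbf{w}^{(t)}-\mathbf{w}^*,\tilde{g}^{(t)}\rangle + \eta^2\|\tilde{g}^{(t)}\|^2$, I would take joint expectation over $\alpha_t$ and the $\xi_i^{(t)}$, apply unbiasedness to the cross term, invoke $\mu$-strong convexity in the form $\langle\mathbf{w}^{(t)}-\mathbf{w}^*,\nabla F(\mathbf{w}^{(t)})\rangle \geq F(\mathbf{w}^{(t)}) - F(\mathbf{w}^*) + \tfrac{\mu}{2}\|\mathbf{w}^{(t)}-\mathbf{w}^*\|^2$, and drop the non-positive $-2\eta(\mathbb{E}F(\mathbf{w}^{(t)}) - F(\mathbf{w}^*))$ term. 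Assuming the key inequality $\mathbb{E}\|\tilde{g}^{(t)}\|^2 \leq C$ for $C$ as in \eqref{eq:C}, this yields $A_{t+1} \leq (1-\eta\mu)A_t + \eta^2 C$; unrolling gives $A_T \leq (1-\eta\mu)^T A_0 + \tfrac{\eta C}{\mu}\bigl(1-(1-\eta\mu)^T\bigr)$. Combining with $F(\mathbf{w}^{(T)}) - F(\mathbf{w}^*) \leq \tfrac{L}{2}A_T$ (from $L$-smoothness at $\mathbf{w}^*$) and $A_0 \leq \tfrac{2}{\mu}(F(\mathbf{w}^{(0)})-F(\mathbf{w}^*))$ (from $\mu$-strong convexity at $\mathbf{w}^*$), then regrouping the $(1-\eta\mu)^T$ terms, reproduces \eqref{eq:conv} exactly.

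The technical heart of the proof is the second-moment bound $\mathbb{E}\|\tilde{g}^{(t)}\|^2 \leq C$. Expanding the squared norm splits it into a diagonal sum $\sum_i (\alpha_i^t)^2 (p_i T_i^t)^2 \|g_i\|^2$ and off-diagonal cross terms $\sum_{i \neq j}\alpha_i^t\alpha_j^t (p_i T_i^t)(p_j T_j^t)\langle g_i, g_j\rangle$. Because each user samples its uniform offset $J$ locally and independently of the other users, the participation indicators satisfy $\alpha_i^t \perp \alpha_j^t$ for $i\neq j$; conditional on the deterministic energy-arrival pattern, $\mathbb{E}[(\alpha_i^t)^2]=1/T_i^t$ and $\mathbb{E}[\alpha_i^t\alpha_j^t]=1/(T_i^t T_j^t)$. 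Together with cross-user independence of the gradient noise, the diagonal contributes at most $\sum_i T_{i,\max}\, p_i^2\, G^2$ via \eqref{eq:bounded-norm}, and the off-diagonal collapses to $\sum_{i \neq j} p_i p_j \langle\nabla F_i,\nabla F_j\rangle \leq G^2 \sum_{i\neq j}p_ip_j$ by Cauchy--Schwarz and Jensen applied to \eqref{eq:bounded-norm}. Using $\sum_{i,j}p_ip_j=1$ regroups the pieces into exactly the constant $C$ of \eqref{eq:C}.

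The main obstacle is precisely this second-moment calculation; every other step is routine strongly-convex SGD bookkeeping once the estimator's variance is controlled. The subtle point is rigorously justifying pairwise independence of the $\alpha_i^t$ across users, which requires conditioning on the (deterministic) energy-arrival schedule so that $\alpha_i^t$ depends only on user $i$'s single uniform draw at $\bunderline{I}_i^t$. Making this conditioning explicit is where the energy-harvesting structure materially enters the analysis; with that in hand, the rest is a standard calculation.
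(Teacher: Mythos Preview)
Your proposal is correct and follows essentially the same route as the paper: expand $\|\mathbf{w}^{(t+1)}-\mathbf{w}^*\|^2$, use Lemma~\ref{lemma-deter} plus $\mu$-strong convexity on the cross term, bound the second moment of the scaled aggregate by $C$, drop the nonpositive function-gap term, and unroll the geometric recursion before converting via $L$-smoothness and strong convexity at $\mathbf{w}^*$. The only cosmetic difference is in how the second-moment bound is organized: the paper first centers on $\bar g^{(t)}=\sum_i p_i g_i^t$ and writes $\mathbb{E}\|\tilde g^{(t)}\|^2=\mathbb{E}\|\tilde g^{(t)}-\bar g^{(t)}\|^2+\mathbb{E}\|\bar g^{(t)}\|^2$, whereas you expand directly into diagonal and off-diagonal pieces; both use the same independence of $(\alpha_i^t,g_i^t)$ across users and yield the identical constant $C$ in \eqref{eq:C}.
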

\vspace{-0.3cm}
\begin{remark}
The first term in the right hand side of \eqref{eq:conv} vanishes as $T\rightarrow \infty$, whereas the second term $\frac{\eta L C}{2\mu}$ represents a non-vanishing error term due to the constant learning rate. By using a decreasing learning rate as in \cite{cho2020client, LiHYWZ20}, this term can also be made vanishing as $T\rightarrow \infty$. 
\end{remark}
\vspace{-0.4cm}
\begin{proof} (Sketch) 
The proof follows standard steps for the convergence analysis of distributed SGD algorithms \cite{DBLP:conf/nips/AgarwalSYKM18, LiHYWZ20, stich2018local}, hence we provide a proof sketch in the sequel. 
By letting $g_i^t \triangleq g_i(\bw^{(t)}, \xi_t)$, $\bw^* \triangleq  \arg\min_{\bw} F(\bw)$, and  $\xi_t\triangleq (\xi_1^{(t)}, \ldots, \xi_N^{(t)})$, from \eqref{global-deter} we find that,
\begin{align}
& \mathbb{E}_{S_t, \xi_t}[\lVert \bw^{(t+1)}   - \mathbf{w}^*\rVert^2] =  \mathbb{E}_{S_t, \xi_t}[ \lVert \bw^{(t)} - \mathbf{w}^*\rVert^2 ]  \notag \\ 
& \!-\! 2 \eta \mathbb{E}_{S_t, \xi_t}[\inner{\bw^{(t)} \!\! - \! \bw^*} {\!\!\sum_{i\in\mathcal{S}_t} p_i T_i^t g_i^t}]  
\!+\! \eta^2 \mathbb{E}_{S_t, \xi_t}[\lVert \sum_{i\in S_t} p_i T_i^t g_i^t\rVert^2] \label{eq:322}
\end{align} 

\vspace{-0.25cm}
\noindent
From Lemma~\ref{lemma-deter}, \eqref{unbiased}, and $\mu$-strong convexity, we observe that, 
\vspace{-0.1cm}\begin{align}
& \mathbb{E}_{S_t, \xi_t}[\inner{\bw^{(t)} - \bw^*} {\sum_{i\in\mathcal{S}_t} p_i T_i^t g_i^t}] \notag \\ 
& = \mathbb{E}_{S_t, \xi_t}[\inner{\bw^{(t)} - \bw^*} {\sum_{i\in\mathcal{S}_t} p_i T_i^t g_i^t - \sum_{i=1}^N p_i \nabla F_i (\bw^{(t)})} ]  \notag \\
&\hspace{1cm}+ \mathbb{E}_{S_t, \xi_t}[\inner{\bw^{(t)}-\bw^*}{\sum_{i=1}^N p_i \nabla F_i (\bw^{(t)})} ]  \label{eq:36} \\
& = \inner{\bw^{(t)} - \bw^*}{\nabla F(\bw^{(t)})}\label{eq:37}\\
& \geq    F(\bw^{(t)}) - F(\bw^*)  +   \frac{\mu}{2} \lVert \bw^*  -  \bw^{(t)} \rVert^2\label{eq:convex}
\end{align} 
We also have from Lemma~\ref{lemma-deter} that,
\vspace{-0.2cm}\begin{align}
 \mathbb{E}_{S_t, \xi_t}[\lVert \sum_{i\in S_t} p_i T_i^t g_i^t\rVert^2] & = \mathbb{E}_{S_t, \xi_t}[\lVert \sum_{i\in S_t} p_i T_i^t g_i^t 
- \sum_{i=1}^N p_i g_i^t 
\rVert^2]  \notag \\ 
& \quad \quad +\mathbb{E}_{S_t, \xi_t}[\lVert \sum_{i=1}^N p_i g_i^t 
\rVert^2] \label{eq:42}
\end{align} 
By combining \eqref{eq:322}, \eqref{eq:convex}, and  \eqref{eq:42}, we find that,
\vspace{-0.1cm}\begin{align}
& \mathbb{E}_{S_t, \xi_t}[\lVert \bw^{(t+1)}   \!-\! \mathbf{w}^*\rVert^2] \notag \\
& \!\leq\! (1\!-\!\eta\mu) \mathbb{E}_{S_t, \xi_t} [ \lVert \bw^{(t)} - \bw^*\rVert^2 ]
- 2 \eta (F(\bw^{(t)})- F(\bw^*)) \notag \\
& 
\!+\! \eta^2 \mathbb{E}_{S_t, \xi_t}[\lVert 
\sum_{i\in S_t} p_i T_i^t g_i^t \!-\! \sum_{i=1}^N p_i g_i^t 
\rVert^2] \!+\! \eta^2 \mathbb{E}_{S_t, \xi_t}[\lVert 
\sum_{i=1}^N p_i g_i^t \rVert^2] \label{eq:477}
\end{align}

\vspace{-0.5cm}
\noindent
By defining $\alpha_i^t$ as in \eqref{alpha} and $\alpha_t = (\alpha_1^t, \ldots, \alpha_N^t)$, 
\begin{align}
& \mathbb{E}_{S_t, \xi_t}[\lVert 
\sum_{i\in S_t} p_i T_i^t g_i^t - \sum_{i=1}^N p_i g_i^t \rVert^2] \notag \\
& = \mathbb{E}_{\alpha_t, \xi_t}[\lVert 
\sum_{i=1}^N p_i  ( \alpha_i^t T_i^t g_i^t -  g_i^t  ) \rVert^2] \\
& = \sum_{i=1}^N p_i^2 \mathbb{E}_{\alpha_t, \xi_t}[\lVert 
  \alpha_i^t T_i^t g_i^t -  g_i^t  \rVert^2] \notag \\
&+ \sum_{i=1}^N \sum_{\substack{j=1\\j\neq i}}^N   
\mathbb{E}_{\alpha_t, \xi_t}[
 \inner{p_i (\alpha_i^t T_i^t g_i^t -  g_i^t )}{p_j (\alpha_j^t T_j^t g_j^t -  g_j^t ) } ] \label{eq:48b} \\
& = \sum_{i=1}^N p_i^2 \mathbb{E}_{\alpha_t, \xi_t}[\lVert 
  \alpha_i^t T_i^t g_i^t -  g_i^t  \rVert^2] \label{eq:48c}\\ 
& = \sum_{i=1}^N p_i^2 (T_i^t)^2 \mathbb{E}_{\xi_t} [  \mathbb{E}_{\alpha_t| \xi_t}[
  (\alpha_i^t - \frac{1}{T_i^t})^2 \lVert g_i^t  \rVert^2|\xi_t]] \\ 
& \leq \sum_{i=1}^N p_i^2(T_{i, max}-1) 
   G^2 \label{eq:leqG}   
\end{align}
where \eqref{eq:48c} holds from \eqref{eq:prob} and that $(\alpha_i^t, g_i^t)$ is independent from  $(\alpha_j^t, g_j^t)$ for all $i\neq j$;  \eqref{eq:leqG} is from \eqref{eq:prob} and  \eqref{eq:bounded-norm}.  
Finally, 
\vspace{-0.2cm}\begin{align}
&\eta^2 \mathbb{E}_{S_t, \xi_t}[\lVert 
\sum_{i=1}^N p_i g_i^t \rVert^2] \notag \\ 
& \leq \sum_{i=1}^N p_i^2  \mathbb{E}_{\xi_t} [\lVert  g_i^t \rVert^2] 
+ \sum_{i=1}^N \sum_{\substack{j=1\\ j\neq i}}^N p_i p_j  \mathbb{E}_{\xi_t} [\lVert g_i^t\rVert \lVert g_j^t\rVert]   \label{eq:CS} \\
& \leq \sum_{i=1}^N p_i^2  \mathbb{E}_{\xi_t} [\lVert  g_i^t \rVert^2] 
+ \sum_{i=1}^N \sum_{\substack{j=1\\ j\neq i}}^N  \frac{p_i p_j }{2}\mathbb{E}_{\xi_t} [\lVert g_i^t\rVert^2 + \lVert g_j^t\rVert^2]    \label{eq:AM-GM} \\ 
& \leq \sum_{i=1}^N \sum_{j=1}^N  p_i p_j G^2   \label{eq:G2}
\end{align}
where \eqref{eq:CS} is from the Cauchy-Schwarz inequality; \eqref{eq:AM-GM} is from the AM-GM inequality;  \eqref{eq:G2} is from \eqref{eq:bounded-norm}.  
By combining \eqref{eq:477} and \eqref{eq:leqG} with \eqref{eq:G2} and noting that $- 2 \eta (F(\bw^{(t)})- F(\bw^*)) \leq 0$,   
\begin{align}
& \mathbb{E}_{S_t, \xi_t}[\lVert \bw^{(t+1)}   \!- \mathbf{w}^*\rVert^2] 
 \!\leq \! (1\!-\!\eta\mu) \mathbb{E}_{S_t, \xi_t} [ \lVert \bw^{(t)} - \bw^*\rVert^2 ]
 \notag \\
& \hspace{1cm} \!+\! \eta^2 \Big (\sum_{i=1}^N \Big ( (T_{i, max} - 1) \Big ) p_i^2 +  
\sum_{i=1}^N \sum_{j=1}^N p_i p_j \Big ) G^2 \!\!\!\! \label{eq:63}
\end{align}
The remainder of the proof follows from standard induction arguments as in \cite{cho2020client, LiHYWZ20}, hence is omitted. 
\end{proof}

\begin{corollary}
For distributed SGD with stochastic energy arrivals, Theorem~\ref{thm1-deter} holds by replacing $T_{i, max}$ with $\frac{1}{\beta_i}$ for binary arrivals and with $T_i$ for uniform arrivals, respectively. The convergence analysis follows the same steps.  
\end{corollary}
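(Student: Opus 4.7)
The plan is to mirror the proof of Theorem~\ref{thm1-deter} verbatim, with $T_i^t$ replaced by the stochastic scaling factor $\gamma_i^t$ (equal to $1/\beta_i$ for binary arrivals and $T_i$ for uniform arrivals), and to re-derive only the single place where the specific form of $T_i^t$ enters the bound, namely the variance term in \eqref{eq:leqG}. Lemma~\ref{lemma-deter} already provides the unbiasedness statement for both stochastic arrival models, so the decomposition from \eqref{eq:322} to \eqref{eq:477}, which only uses unbiasedness, the splitting of the second-moment via a mean/fluctuation decomposition, and $\mu$-strong convexity, carries over without change once $T_i^t$ is substituted by $\gamma_i^t$ throughout the model update \eqref{global-binary}.

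The only step that needs re-computation is the per-user term $\mathbb{E}_{\alpha_t,\xi_t}[\lVert \alpha_i^t \gamma_i^t g_i^t - g_i^t \rVert^2]$. Conditioning on $\xi_t$ and using the second moment bound \eqref{eq:bounded-norm}, this reduces to bounding $\mathbb{E}[(\alpha_i^t \gamma_i^t - 1)^2]$. For binary arrivals, $\alpha_i^t \sim \text{Bern}(\beta_i)$ since the best-effort rule makes user $i$ participate exactly when $E_i^t=1$; a direct computation gives $\mathbb{E}[(\alpha_i^t/\beta_i - 1)^2] = 1/\beta_i - 1$. For uniform arrivals, $P[\alpha_i^t=1]=1/T_i$ within each window, and $\mathbb{E}[(\alpha_i^t T_i - 1)^2] = T_i - 1$. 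Consequently, $(T_{i,\max}-1)$ in the constant $C$ of \eqref{eq:C} is replaced respectively by $(1/\beta_i - 1)$ or $(T_i - 1)$, and the remaining bound \eqref{eq:G2} on $\eta^2 \mathbb{E}_{\xi_t}[\lVert \sum_i p_i g_i^t\rVert^2]$ is unaffected since it does not involve the scaling factor.

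The cross-term argument used to pass from \eqref{eq:48b} to \eqref{eq:48c} is the one place where one must verify an independence assumption in the new setting. For binary arrivals the $E_i^t$ are drawn independently across $i$, so $(\alpha_i^t, g_i^t)$ and $(\alpha_j^t, g_j^t)$ remain independent for $i\neq j$; for uniform arrivals each user independently draws its within-window participation time, so the same conclusion holds at any fixed $t$. Combined with $\mathbb{E}[\alpha_i^t \gamma_i^t - 1] = 0$, the off-diagonal expectations vanish as before. After this substitution, the recursion \eqref{eq:63} holds with the modified constant, and the final inequality \eqref{eq:conv} follows by exactly the same induction over $t=0,\ldots,T-1$ and the choice $\eta\le\min\{1/(2\mu), 1/L\}$ invoked at the end of the proof of Theorem~\ref{thm1-deter}.

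The main obstacle I anticipate is purely bookkeeping: one must be consistent in distinguishing the \emph{deterministic} quantity $T_i^t$ (a window length observed by user $i$) from the \emph{random} indicator $\alpha_i^t$, and confirm that in the stochastic case $\gamma_i^t$ is a deterministic constant (equal to $1/\beta_i$ or $T_i$) so that it can be pulled out of expectations just as $T_i^t$ was in the deterministic proof. Once this is properly tracked, no new analytical idea is required beyond the variance calculations above.
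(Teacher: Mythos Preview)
Your proposal is correct and follows exactly the approach the paper intends: the paper's own ``proof'' of the corollary is literally the phrase ``the convergence analysis follows the same steps,'' and you have simply made those steps explicit by recomputing the variance $\mathbb{E}[(\alpha_i^t\gamma_i^t-1)^2]$ for each arrival model and checking the cross-term independence. Your bookkeeping observations about $\gamma_i^t$ being a deterministic constant that can be pulled out of expectations are accurate and fill in precisely the gap the paper leaves implicit.
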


\section{Experiments}  \label{05_Experiments}

\begin{figure}[t]
\centering
\includegraphics[width=0.95\linewidth]{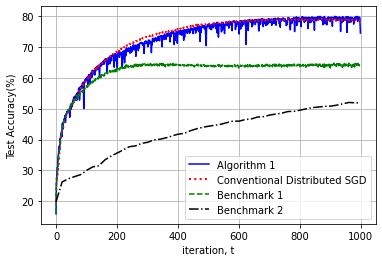}
\vspace{-0.3cm}\caption{Test accuracy of Algorithm~\ref{Alg-deterministic} compared to the benchmark distributed SGD algorithms for $N=40$ users on the CIFAR-10 dataset. }
\label{fig:test-accuracy}
\vspace{-0.2cm}
\end{figure}

In our experiments, we consider a conventional image classification task with $10$ classes on the CIFAR-10 dataset \cite{krizhevsky2009learning}, distributed over 40 users uniformly at random. Training is performed via distributed SGD using the convolutional neural network architecture from  \cite{mcmahan2017communication} (about $10^6$ model parameters).  
To demonstrate the impact of non-homogeneous energy-arrivals, users are partitioned into $4$ equal-sized groups $\mathcal{A}_0, \ldots, \mathcal{A}_3$ such that $\mathcal{A}_ k =\{i: i\mod 4 = k\}$, 
and the energy profiles of users in group $\mathcal{A}_k$ are set as:
\vspace{-0.12cm}\begin{equation}
E_i^t = \left \{ \begin{matrix} 
1 & \forall t  \text{ such that } t\!\!\!\mod \tau_k = 0 \\ 
0 &  \text{ otherwise } \\ 
\end{matrix} \right .
\end{equation}
for $i\in\mathcal{A}_k$, where $(\tau_0, \tau_1, \tau_2, \tau_3) = (1, 5, 10, 20)$. 
Therefore, users in group $\mathcal{A}_0$ receive energy at every time-instant $t$, whereas users in groups $\mathcal{A}_1$, $\mathcal{A}_2$, and $\mathcal{A}_3$ receive energy at every $5$, $10$, and $20$ time-instants, respectively. 
We compare our framework with the following distributed SGD benchmarks:

\noindent
{\bf Benchmark 1.} We first implement the distributed SGD framework from Section~\ref{sysmodel} when users participate in training as soon as they have energy available, by computing the gradient from \eqref{gradient} and sending it to the server, and then wait for the next energy arrival. Note that in this setting users do not scale the gradients with respect to the energy arrivals.

\noindent
{\bf Benchmark 2.} We then consider the distributed SGD framework from Section~\ref{sysmodel} when the global model is updated only if all users have enough energy to participate in training. That is, the server waits until all users have energy, then sends the current model parameters to the users, users compute the stochastic gradient from \eqref{gradient} and send it back to the server, and then the server updates the model as in \eqref{eq:updateconv}. 
Hence, in this case, the model is updated once every $t=20$ iterations.

Finally, we also implement the conventional distributed SGD framework from Section~\ref{sysmodel} when all users are available at every iteration, which represents our target (desired) accuracy level.  
We demonstrate our results in terms of the test accuracy with respect to time $t$ in Figure~\ref{fig:test-accuracy}. Our results show that Algorithm~\ref{Alg-deterministic} achieves the same accuracy level (about $80\%$) as conventional distributed SGD, whereas the two benchmarks achieve an accuracy of $64\%$ and $52\%$, respectively, within $t=1000$ iterations. This is due to the fact that the first benchmark favors users with more frequent energy arrivals, hence the model is biased. The second benchmark waits for all users to have enough energy before making a single SGD update, hence, even though the training algorithm is unbiased, its convergence rate is very slow. 
In contrast, Algorithm~\ref{Alg-deterministic} converges fast while achieving good accuracy.

\section{Conclusion}\label{conclusion} 
We have studied distributed machine learning when users have intermittent energy availability, and demonstrated a simple distributed learning strategy with provable convergence guarantees. Future directions include exploring optimal scheduling and training strategies with energy accumulation. We hope our study to open up further research on energy harvesting for sustainable learning in distributed and mobile networks.

% \clearpage
\bibliographystyle{IEEEtran}
\bibliography{ref}

\IEEEtriggeratref{4}

\end{document}